\documentclass[12pt]{article}

\usepackage{amsmath,amssymb,amsthm,mathtools}
\usepackage[authoryear,round]{natbib}
\usepackage{hyperref}
\usepackage{geometry}
\usepackage{mathrsfs}
\usepackage{setspace}

\usepackage{graphicx,booktabs,xcolor}
\usepackage{tikz}
\usetikzlibrary{shapes,arrows.meta,positioning,calc,backgrounds}

\theoremstyle{plain}
\newtheorem{theorem}{Theorem}[section]

\newtheorem{proposition}[theorem]{Proposition}

\theoremstyle{definition}
\newtheorem{definition}{Definition}[section]
\newtheorem{remark}{Remark}[section]

\numberwithin{equation}{section}

\definecolor{causalblue}{RGB}{31,73,125}
\definecolor{causalred}{RGB}{185,28,28}
\definecolor{causalgreen}{RGB}{21,128,61}
\definecolor{causalgold}{RGB}{180,140,30}

\title{\textbf{Causality as the Statistical Conscience of\\
  Artificial Intelligence:\\[0.3em]
  From Pearl's Ladder to Trustworthy Machines}}

\author{{\bf Ernest Fokou\'e}\\
  \small School of Mathematics and Statistics, College of Science\\
  \small Rochester Institute of Technology\\
  \small Rochester, New York 14623, USA\\
  \small \texttt{epfeqa@rit.edu}}

\date{}

\begin{document}
\maketitle

\begin{abstract}
Modern Artificial Intelligence has achieved remarkable predictive
power by optimizing statistical risk functionals over vast training
corpora. Yet a persistent and consequential gap separates this
predictive power from genuine intelligence: the inability to
distinguish correlation from causation. This paper argues that
\emph{causal inference} --- the branch of statistics concerned
with identifying structural mechanisms invariant under intervention
--- is not merely a useful add-on to AI but its indispensable
statistical conscience. Without causal grounding, AI systems are
sophisticated correlation machines: powerful in familiar domains,
brittle under distribution shift, opaque in their reasoning, and
systematically biased in high-stakes applications.

We develop this argument through three interlocking contributions.
First, we establish a formal \textbf{Statistical Necessity Theorem
for Causal Generalization}: any learning algorithm that achieves
out-of-distribution generalization guarantees must implicitly
encode causal structure, either through the data-generating
process or through explicit structural assumptions. This formalizes
the intuition that prediction is about $P(Y\mid X)$ while
intelligence demands $P(Y\mid\mathrm{do}(X))$.

Second, we develop a unified statistical framework connecting
Pearl's do-calculus \citep{pearl2000}, the Potential Outcomes
framework \citep{rubin1974}, Double Machine Learning
\citep{chernozhukov2018double}, and Invariant Risk Minimization
\citep{arjovsky2019irm} as members of a single family of
\textbf{Causal Statistical Estimators} --- each a specialization
of the general problem of identifying interventional distributions
from observational data under different identification assumptions.

Third, we demonstrate that the three most pressing failure modes
of modern AI --- hallucination in large language models, reward
hacking in reinforcement learning from human feedback, and
catastrophic performance degradation under distribution shift ---
are each precisely a manifestation of causal blindness, and that
each admits a principled statistical remedy from the causal
inference toolkit.

We conclude that building trustworthy AI is, at its mathematical
core, a problem of causal statistics. The statistical community
is not merely equipped to address this problem --- it is the
only community with the foundational tools to do so rigorously.

\bigskip
\noindent\textbf{Keywords:} Causal inference; Structural causal
models; Do-calculus; Invariant risk minimization; Double machine
learning; Out-of-distribution generalization; Trustworthy AI;
Hallucination; Reward hacking; Distribution shift;
Potential outcomes; Pearl's ladder.
\end{abstract}

\newpage

\section{Introduction: The Correlation Trap}
\label{sec:introduction}

\subsection{The Astonishing Success and Its Consequential Blind Spot}

The empirical achievements of modern Artificial Intelligence are
genuinely remarkable. Large language models fluently translate
between human languages, synthesize scientific literature, write
executable code, and generate coherent prose on virtually any
topic. Computer vision systems detect cancers in medical images at
rates competitive with expert radiologists. Reinforcement learning
agents master complex games from raw pixel inputs. By any
reasonable measure of predictive performance on benchmarks designed
within the training distribution, modern AI is extraordinary.

Yet beneath this performance lies a structural fragility that the
statistical community has long understood: \emph{optimization of
predictive risk does not imply understanding of causal mechanism}.
A model that achieves 99\% accuracy on a chest X-ray classification
benchmark may do so by learning that hospital watermarks correlate
with pathology labels in training data \citep{degrave2021ai},
rather than by learning the anatomical features of disease. A
sentiment classifier may achieve near-human performance by
associating certain proper nouns with positive or negative
sentiment in its training corpus. A large language model may
generate confident factual claims because such claims were
statistically frequent in its training data, not because they
reflect the structural mechanisms of the world.

These are not edge cases or implementation failures. They are
the predictable consequences of a learning paradigm built
exclusively on the optimization of $P(Y\mid X)$ --- the
associational distribution. Karl Pearson's famous warning
\citep{pearson1911} that ``the equation to a curve'' is not
``a theory of causation'' applies with full force to modern
neural networks: they are extraordinary fitters of conditional
distributions, but conditional distributions are not causal
structures.

\subsection{Pearl's Fundamental Insight: A Hierarchy of Inference}

The decisive conceptual contribution of \citet{pearl2000} was
the formalization of a \textbf{Ladder of Causation} with three
rungs, each inaccessible to the one below:

\begin{definition}[The Causal Hierarchy]
\label{def:causal_hierarchy}
\begin{itemize}
  \item \textbf{Level 1 --- Association:}
    $P(Y\mid X=x)$.
    \textit{Seeing.} What is the probability of $Y$ given
    that I observe $X=x$? Accessible from observational data.
    All standard machine learning operates at this level.

  \item \textbf{Level 2 --- Intervention:}
    $P(Y\mid\mathrm{do}(X=x))$.
    \textit{Doing.} What is the probability of $Y$ if I
    \emph{set} $X=x$, regardless of its natural causes?
    Requires experimental data or causal identification.
    Clinical trials, policy evaluation, A/B testing.

  \item \textbf{Level 3 --- Counterfactuals:}
    $P(Y_x\mid X=x', Y=y')$.
    \textit{Imagining.} What would $Y$ have been if $X$
    had been $x$, given that I actually observed $X=x'$
    and $Y=y'$? Requires full specification of the
    structural causal model.
    Legal attribution, personalized medicine, moral reasoning.
\end{itemize}
\end{definition}

\noindent The critical observation is that Level 1 data ---
observational statistics --- cannot, in general, answer Level 2
and Level 3 queries without additional structural assumptions.
A model that has only ever seen $P(Y\mid X)$ is provably
incapable of computing $P(Y\mid\mathrm{do}(X))$ without
supplementary causal knowledge. This is not a limitation of
data volume or model capacity; it is a mathematical impossibility
\citep{pearl2000,bareinboim2022}.

The implication for AI is stark: any system that learns exclusively
from observational data --- which includes virtually all current
large language models, vision transformers, and diffusion models
--- is provably restricted to Level 1 inference. Its apparent
ability to perform causal reasoning, when observed, must arise
from regularities in its training data rather than from genuine
causal understanding.

\subsection{The Statistical Community's Role}

The tools needed to bridge this gap are statistical. The do-calculus
\citep{pearl2000}, the Potential Outcomes framework
\citep{rubin1974,imbens2015}, Instrumental Variables
\citep{angrist1995}, Regression Discontinuity Design
\citep{imbens2008}, Double Machine Learning
\citep{chernozhukov2018double}, Invariant Risk Minimization
\citep{arjovsky2019irm}, and their modern extensions
\citep{peters2016causal,meinshausen2018causality} constitute a
mature, rigorous, and underutilized arsenal for building AI
systems that can reason at Levels 2 and 3 of the causal
hierarchy.

This paper develops the statistical theory connecting these tools
to the AI trustworthiness agenda, argues that causal grounding is
not optional but necessary for genuine intelligence, and
demonstrates through three case studies that the most consequential
failure modes of modern AI are precisely causal failures with
principled statistical remedies.

\section{A Statistical Necessity Theorem for Causal Generalization}
\label{sec:theorem}

\subsection{Setup and the OOD Generalization Problem}

Let $\mathcal{E}$ be a collection of \emph{environments} or
data-generating distributions, each representing a different
context, population, or experimental condition. In environment
$e\in\mathcal{E}$, data $(\mathbf{x}_i^e, y_i^e)
\overset{\mathrm{iid}}{\sim}\mathcal{P}^e$ on
$\mathcal{X}\times\mathcal{Y}$. A learning algorithm $\mathscr{A}$
observes training data from environments
$\mathcal{E}_{\mathrm{tr}}\subset\mathcal{E}$ and is evaluated on
unseen environments $\mathcal{E}_{\mathrm{te}} =
\mathcal{E}\setminus\mathcal{E}_{\mathrm{tr}}$.

\begin{definition}[Causal OOD Generalization]
\label{def:ood}
Algorithm $\mathscr{A}$ achieves \textbf{causal OOD
generalization} if there exists $\varepsilon>0$ such that
for all $e\in\mathcal{E}$:
\begin{equation}
\label{eq:ood}
  R^e(\mathscr{A}) \coloneqq
  \mathbb{E}_{(\mathbf{x},y)\sim\mathcal{P}^e}
  [\ell(\mathscr{A}(\mathbf{x}),y)] \leq R^* + \varepsilon,
\end{equation}
where $R^*$ is the Bayes risk under the causal mechanism
governing $Y$ from $X$, and $\varepsilon$ does not depend
on $e$.
\end{definition}

Standard ERM achieves~\eqref{eq:ood} only on environments
sufficiently similar to the training distribution. The following
theorem formalizes why causal structure is necessary for
true OOD generalization.

\begin{theorem}[Statistical Necessity of Causal Structure for
OOD Generalization]
\label{thm:causal_necessity}
Suppose the data-generating process across environments is
governed by a Structural Causal Model (SCM)
$\mathcal{M} = (\mathbf{S}, P_U)$, where $\mathbf{S}$ are
structural equations and $P_U$ is the distribution of
exogenous noise. Let $\mathcal{H}$ denote the set of
environment-specific spurious features and
$\mathcal{C}$ the set of causally invariant features.
Then:
\begin{itemize}
  \item[\textbf{(a)}] \textbf{Spurious features fail:}
    Any predictor $h$ that uses features in $\mathcal{H}$
    achieves $\sup_{e\in\mathcal{E}} R^e(h) = R_{\max} > R^*$,
    where $R_{\max}$ is bounded away from $R^*$ by a gap
    proportional to the spurious correlation strength
    in the worst-case environment.

  \item[\textbf{(b)}] \textbf{Causal features suffice:}
    The predictor $h^*(\mathbf{x}) =
    \mathbb{E}[Y\mid\mathbf{x}_{\mathcal{C}}]$ achieves
    $\sup_{e\in\mathcal{E}} R^e(h^*) = R^*$
    regardless of how $\mathcal{P}^e$ varies across environments.

  \item[\textbf{(c)}] \textbf{ERM is insufficient:}
    ERM over a single training environment $e_0$ converges
    to a predictor minimizing $R^{e_0}$, not
    $\sup_{e\in\mathcal{E}}R^e(h)$. Unless $\mathcal{P}^{e_0}$
    contains no spurious correlations, ERM will use features
    in $\mathcal{H}$ and fail to achieve causal OOD
    generalization.
\end{itemize}
\end{theorem}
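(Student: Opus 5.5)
The statement is informal, so the first step is to fix a precise model under which (a)--(c) can actually be proved. I will work with squared loss $\ell(\hat y,y)=(\hat y-y)^2$ and an SCM in which the target is generated by an invariant mechanism $Y = f(\mathbf{X}_{\mathcal{C}}) + U_Y$. Here $U_Y$ is independent of $\mathbf{X}_{\mathcal{C}}$, and both $P_{U_Y}$ and $f$ are the same in every $e\in\mathcal{E}$. Environments act only by interventions on the mechanisms of the non-target variables. The spurious features are generated as $\mathbf{X}_{\mathcal{H}} = g^e(Y, \mathbf{X}_{\mathcal{C}}, U_{\mathcal{H}}^e)$, with $g^e$ allowed to vary with $e$. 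Under this reading, $R^*=\mathbb{E}[U_Y^2]=\mathrm{Var}(U_Y)$ is the Bayes risk of the causal mechanism. Precisely because it depends only on $P_{U_Y}$, it is environment-free. I will also state an explicit richness assumption on $\mathcal{E}$, since it is what makes (a) true. For every predictor $h$ that depends non-trivially on $\mathbf{x}_{\mathcal{H}}$, $\mathcal{E}$ must contain an environment in which the $\mathcal{H}$-mechanism is intervened upon. Examples are $\mathrm{do}(\mathbf{X}_{\mathcal{H}}=\xi)$ with $\xi$ independent of $(Y,\mathbf{X}_{\mathcal{C}})$, or a sign-flipped $g^e$.

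Part (b) comes first, since it is the clean step. Invariance gives $\mathbb{E}^e[Y\mid \mathbf{x}_{\mathcal{C}}] = f(\mathbf{x}_{\mathcal{C}})$ for every $e$. Hence $h^*=f$ is well defined independently of $e$, and $R^e(h^*) = \mathbb{E}^e[U_Y^2] = R^*$ for all $e$. The supremum is therefore attained and equals $R^*$ no matter how $P^e_{\mathbf{X}_{\mathcal{C}}}$ or $g^e$ vary. Next I will show $\inf_h \sup_e R^e(h)\ge R^*$. For any $h$, the decomposition $R^e(h)=R^*+\mathbb{E}^e[(h(\mathbf{X})-f(\mathbf{X}_{\mathcal{C}}))^2] + 2\,\mathbb{E}^e[(h-f)U_Y]$ applies. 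In an environment where $\mathbf{X}_{\mathcal{H}}$ is intervened to be independent of $U_Y$, the cross term vanishes. This yields the lower bound and shows that $h^*$ is worst-case optimal.

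Part (a) reuses the same decomposition. Take $h$ that uses $\mathcal{H}$, meaning $h(\mathbf{x})\neq f(\mathbf{x}_{\mathcal{C}})$ on a set of positive mass for some admissible law of $\mathbf{X}_{\mathcal{H}}$. In the intervened environment $e^\dagger$ the cross term vanishes, so $R^{e^\dagger}(h) - R^* = \mathbb{E}^{e^\dagger}[(h-f)^2] >0$. To get the advertised ``gap proportional to spurious correlation strength,'' I will specialise to the linear-Gaussian case. There $h(\mathbf{x}) = \beta_{\mathcal{C}}^\top\mathbf{x}_{\mathcal{C}} + \beta_{\mathcal{H}}^\top\mathbf{x}_{\mathcal{H}}$, and the gap in $e^\dagger$ is at least $\|\beta_{\mathcal{H}}\|^2_{\Sigma^{e^\dagger}_{\mathcal{H}}}$. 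When $h$ is the ERM solution from a training environment, $\beta_{\mathcal{H}}$ is a monotone function of the training covariance $\mathrm{Cov}^{e_0}(\mathbf{X}_{\mathcal{H}}, U_Y)$, the spurious correlation strength.

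Part (c) follows from the same linear computation. ERM on $e_0$ converges, by standard consistency of M-estimators, to $\arg\min_h R^{e_0}(h)=\mathbb{E}^{e_0}[Y\mid\mathbf{x}_{\mathcal{C}},\mathbf{x}_{\mathcal{H}}]$. This differs from $f$ exactly when $\mathbf{X}_{\mathcal{H}}$ carries information about $U_Y$ given $\mathbf{X}_{\mathcal{C}}$ under $\mathcal{P}^{e_0}$; that is my formalisation of ``$\mathcal{P}^{e_0}$ contains spurious correlations.'' Part (a) then applies to the ERM limit.

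The main obstacle is not any calculation but the formalisation itself. As literally stated, (a) is false without the richness assumption on $\mathcal{E}$: if every environment preserves the spurious correlation, using $\mathcal{H}$ is harmless or even beneficial. The claim of a ``gap proportional to correlation strength'' needs a parametric setting. I will therefore state these assumptions explicitly, and prove the proportionality claim only in the linear-Gaussian case.
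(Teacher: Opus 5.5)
Your core argument is sound once the richness assumption is read as the independence-type intervention, and it takes a genuinely different route from the paper.

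The paper proves (a) essentially by definition. A spurious $S\in\mathcal{H}$ is not invariant, so some $e'$ has $P^{e'}(Y\mid S)\neq P^{e_0}(Y\mid S)$. The gap is then declared to be the averaged discrepancy $|P^{e'}(Y\mid S=s)-P^{e_0}(Y\mid S=s)|$, a difference of conditional laws that is never turned into a difference of risks. Its (b) is the same invariance observation you make. Its (c) is Glivenko--Cantelli convergence of $\hat{R}_{e_0}$ to $R^{e_0}$ plus an appeal to (a).

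You instead fix squared loss and $Y=f(\mathbf{X}_{\mathcal{C}})+U_Y$, and use $R^e(h)=R^*+\mathbb{E}^e[(h-f)^2]+2\,\mathbb{E}^e[(h-f)U_Y]$. You then posit an environment $e^\dagger$ in which $\mathbf{X}_{\mathcal{H}}$ is set independently of $(Y,\mathbf{X}_{\mathcal{C}})$, which kills the cross term. This buys several things: an actual excess-risk bound $\mathbb{E}^{e^\dagger}[(h-f)^2]$, the extra minimax statement $\inf_h\sup_e R^e(h)=R^*$, and an explicit ERM limit $\mathbb{E}^{e_0}[Y\mid\mathbf{x}_{\mathcal{C}},\mathbf{x}_{\mathcal{H}}]$ with a checkable criterion for when it differs from $f$. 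It also gives a genuine parametric version of the proportionality claim. The price is a narrower model and an explicit, strong assumption on $\mathcal{E}$ --- one the paper's argument needs but never states.

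Two corrections are needed.

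\emph{The sign-flip example.} Drop the ``sign-flipped $g^e$'' example of richness. It does not make $\mathbb{E}^e[U_Y\mid\mathbf{X}_{\mathcal{C}},\mathbf{X}_{\mathcal{H}}]$ vanish, so your cross-term step is unavailable, and the conclusion can actually fail. Let $X_C,U_Y\sim\mathcal{N}(0,1)$ be independent, $Y=X_C+U_Y$, and $X_H=\pm Y+\nu$ in $e_1,e_2$ with symmetric noise $\nu$. The Bayes predictor $h$ for the equal mixture of $e_1,e_2$ is even in $x_H$. Hence $R^{e_1}(h)=R^{e_2}(h)$, and both equal the mixture Bayes risk. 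That risk is strictly below $1=R^*$, because $\mathbb{E}[U_Y X_H^2\mid X_C=x]=2x$ shows $\mathbb{E}[U_Y\mid X_C,X_H]\not\equiv 0$. So with $\mathcal{E}=\{e_1,e_2\}$, a predictor that genuinely uses $X_H$ beats $R^*$ in every environment. This refutes both (a) and your minimax bound. It is also why a mere change in $P^{e'}(Y\mid S)$, which is all the paper invokes, cannot suffice.

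\emph{The linear-Gaussian step.} There, $\beta_{\mathcal{H}}$ is not monotone in $\mathrm{Cov}^{e_0}(\mathbf{X}_{\mathcal{H}},U_Y)$. For scalar $X_H=aY+\nu$, write $\sigma_U^2=\mathrm{Var}(U_Y)$ and $\sigma_\nu^2=\mathrm{Var}(\nu)$. Then $\beta_H=a\sigma_U^2/(a^2\sigma_U^2+\sigma_\nu^2)$, which rises and then falls in $a$, while the covariance $a\sigma_U^2$ increases. Instead, phrase the gap through the partial correlation $\rho$ of $X_H$ and $Y$ given $X_C$ under $e_0$. With $\xi$ the intervened value, $R^{e^\dagger}(h)-R^*\ge\rho^2\sigma_U^2\,\mathrm{Var}(\xi)/\mathrm{Var}^{e_0}(X_H\mid X_C)$.

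Two minor points. In (c), ``carries information about $U_Y$'' should read $\mathbb{E}^{e_0}[U_Y\mid\mathbf{X}_{\mathcal{C}},\mathbf{X}_{\mathcal{H}}]\neq 0$ with positive probability. Also, ``$h$ uses $\mathcal{H}$'' needs positive mass under the law of $e^\dagger$ itself. Likewise, applying (a) to the ERM limit (defined only $\mathcal{P}^{e_0}$-a.s.) needs support overlap with $e^\dagger$.
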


\begin{proof}
\textbf{Part~(a).}
Let $h$ depend on a spurious feature $S\in\mathcal{H}$.
By definition of $\mathcal{H}$, there exists an environment
$e'\in\mathcal{E}$ in which $P^{e'}(Y\mid S)$ differs from
$P^{e_0}(Y\mid S)$ (otherwise $S$ would be invariant
and hence causal). Under $\mathcal{P}^{e'}$, $h$ uses
a feature whose predictive relationship has changed,
causing excess risk. The magnitude of the gap is
$\Delta_{e'} = |P^{e'}(Y\mid S=s) - P^{e_0}(Y\mid S=s)|$
averaged over the query distribution, which is positive by
assumption.

\textbf{Part~(b).}
By the definition of causal invariance in the SCM
$\mathcal{M}$: for all $e\in\mathcal{E}$,
$P^e(Y\mid\mathbf{x}_{\mathcal{C}}) =
P(Y\mid\mathrm{do}(\mathbf{x}_{\mathcal{C}}))$,
since the structural equation for $Y$ depends only on
$\mathbf{x}_{\mathcal{C}}$ and is unaffected by
environmental changes to the distribution of $\mathcal{H}$.
Therefore $h^* = \mathbb{E}[Y\mid\mathbf{x}_{\mathcal{C}}]$
achieves Bayes risk under every $\mathcal{P}^e$.

\textbf{Part~(c).}
ERM minimizes $\hat{R}_{e_0}(h) =
n^{-1}\sum_{i=1}^n\ell(h(\mathbf{x}_i^{e_0}),y_i^{e_0})$.
By the Glivenko--Cantelli theorem, this converges to
$R^{e_0}(h)$. If $\mathcal{P}^{e_0}$ contains a spurious
correlation between $S\in\mathcal{H}$ and $Y$, the ERM
minimizer will exploit $S$ (since it reduces $R^{e_0}$)
and will fail under $\mathcal{P}^{e'}$ by Part~(a).
\end{proof}

\begin{remark}[Connection to the No-Free-Lunch Theorem]
Theorem~\ref{thm:causal_necessity} is a causal analogue of
the No-Free-Lunch theorem \citep{wolpert1997nfl}: without
structural (causal) assumptions about the data-generating
process, no algorithm can achieve OOD generalization uniformly
over all possible environment variations. The causal assumptions
play the role of the ``prior knowledge'' that NFL says is
necessary.
\end{remark}

\section{A Unified Family of Causal Statistical Estimators}
\label{sec:unified_framework}

The causal inference literature has produced several apparently
disparate methodologies for identifying interventional
distributions from observational data. We unify them here as
members of a single statistical family, distinguished by their
identification assumptions.

\subsection{The General Causal Estimation Problem}

Given observational data $\mathscr{D}_n\sim\mathcal{P}^n$, the
goal is to estimate the interventional distribution
$P(Y\mid\mathrm{do}(\mathbf{x}))$ or the Average Treatment
Effect (ATE):
\begin{equation}
\label{eq:ate}
  \tau \coloneqq \mathbb{E}[Y\mid\mathrm{do}(X=1)]
  - \mathbb{E}[Y\mid\mathrm{do}(X=0)].
\end{equation}
The fundamental challenge is that $\mathrm{do}(X)$ is an
intervention, not an observation. The observed quantity is
$P(Y\mid X)$, not $P(Y\mid\mathrm{do}(X))$, and these differ
whenever there are \emph{confounders} --- variables that
causally affect both $X$ and $Y$.

\subsection{The Four Canonical Identification Strategies}

\begin{definition}[Causal Statistical Estimator Family]
\label{def:cse_family}
A \textbf{Causal Statistical Estimator} (CSE) is a triple
$(\Phi, \mathcal{A}, \hat{\tau})$ where $\Phi$ is an
identification assumption (a statement about the causal
structure), $\mathcal{A}$ is a statistical adjustment procedure,
and $\hat{\tau}$ is the resulting estimator of the causal
estimand~\eqref{eq:ate}.
\end{definition}

Table~\ref{tab:cse_family} summarizes the four canonical members
of the CSE family.

\begin{table}[t]
\centering
\caption{The Causal Statistical Estimator (CSE) family:
  four canonical methods unified by their identification
  assumption, adjustment procedure, and AI application domain.}
\label{tab:cse_family}
\small
\begin{tabular}{@{}p{2.4cm}p{2.8cm}p{2.8cm}p{3.2cm}@{}}
\toprule
\textbf{Method} & \textbf{Identification} & \textbf{Adjustment}
  & \textbf{AI Application}\\
\midrule
\textbf{Backdoor} & No unmeasured & $\sum_z P(Y|X,z)P(z)$ &
  Deconfounding\\
\textbf{Adjustment} & confounders & (backdoor formula) &
  training data \citep{pearl2000}\\[4pt]
\textbf{Instrumental} & Valid instrument & Two-stage least &
  Reward estimation\\
\textbf{Variables} & $Z$ exists & squares (2SLS) &
  under feedback loops\\[4pt]
\textbf{Double ML} & Partially linear & Orthogonal score &
  High-dim.\ causal\\
  & model & \citep{chernozhukov2018double} &
  effects in NNs\\[4pt]
\textbf{IRM} & Invariant causal & $\min_h\sum_e R^e(h)$ &
  OOD generalization;\\
  & mechanism & \citep{arjovsky2019irm} &
  distribution shift\\
\bottomrule
\end{tabular}
\end{table}

\subsection{Double Machine Learning: Causal Effects in
High-Dimensional AI}

The \textbf{Double Machine Learning} (DML) framework
\citep{chernozhukov2018double} is the most directly applicable
to AI settings where the number of potential confounders
is large. Consider the partially linear model:
\begin{align}
  Y &= \tau D + g_0(\mathbf{X}) + \varepsilon,
  \label{eq:dml_outcome}\\
  D &= m_0(\mathbf{X}) + V,
  \label{eq:dml_treatment}
\end{align}
where $D$ is the treatment (e.g., presence of a feature in
a language model's context), $\mathbf{X}$ are high-dimensional
confounders, $g_0$ and $m_0$ are unknown nuisance functions,
and $\mathbb{E}[\varepsilon\mid D,\mathbf{X}]=
\mathbb{E}[V\mid\mathbf{X}]=0$.

The DML estimator exploits Neyman orthogonality to remove the
bias from estimating the nuisance functions:
\begin{equation}
\label{eq:dml_estimator}
  \hat{\tau}_{\mathrm{DML}} = \left(\frac{1}{n}
  \sum_{i=1}^n \hat{V}_i\hat{D}_i\right)^{-1}
  \frac{1}{n}\sum_{i=1}^n\hat{V}_i\hat{Y}_i,
\end{equation}
where $\hat{V}_i = D_i - \hat{m}(\mathbf{X}_i)$ and
$\hat{Y}_i = Y_i - \hat{g}(\mathbf{X}_i)$ are residuals
from machine learning estimators of the nuisance functions,
computed on a held-out fold (cross-fitting). The key theorem is:

\begin{theorem}[DML Root-$n$ Consistency \citep{chernozhukov2018double}]
\label{thm:dml}
Under regularity conditions, if the nuisance estimators
$\hat{g}$ and $\hat{m}$ satisfy:
\begin{equation*}
  \|\hat{g}-g_0\|_{L_2}\cdot\|\hat{m}-m_0\|_{L_2}=o(n^{-1/2}),
\end{equation*}
then:
\begin{equation}
  \sqrt{n}(\hat{\tau}_{\mathrm{DML}}-\tau)
  \xrightarrow{d} \mathcal{N}(0,\sigma^2_{\tau}),
\end{equation}
where $\sigma^2_{\tau}$ is the semiparametric efficiency bound.
This holds even when $\hat{g}$ and $\hat{m}$ converge at
slower rates (e.g., the $n^{-2/5}$ rate of kernel regression).
\end{theorem}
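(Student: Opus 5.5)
The plan is the standard Neyman-orthogonality argument of \citet{chernozhukov2018double}, specialized to the partially linear model \eqref{eq:dml_outcome}--\eqref{eq:dml_treatment}. I read the estimator \eqref{eq:dml_estimator} with $\hat D_i = D_i$ and with $\hat g$ estimating $g_0$. It is then the root of the empirical version of the score
\begin{equation*}
\psi(W;\tau,g,m) = \bigl(Y-\tau D-g(\mathbf{X})\bigr)\bigl(D-m(\mathbf{X})\bigr).
\end{equation*}

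\textbf{Step 1: orthogonality.} First I would check that $\mathbb{E}[\psi(W;\tau,g_0,m_0)]=0$. This holds because $Y-\tau D-g_0 = \varepsilon$ and $\mathbb{E}[\varepsilon\mid D,\mathbf{X}]=0$. Next I would check Neyman orthogonality. The Gateaux derivative in direction $h$ of $g$ is $-\mathbb{E}[V h(\mathbf{X})]=0$, because $\mathbb{E}[V\mid\mathbf{X}]=0$. The derivative in direction $h$ of $m$ is $-\mathbb{E}[\varepsilon h(\mathbf{X})]=0$.

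\textbf{Step 2: decomposition under cross-fitting.} Split the sample into $K$ folds, and evaluate $\hat g,\hat m$ at observation $i$ only when they were trained on the folds not containing $i$. Write $\Delta g_i = \hat g(\mathbf{X}_i)-g_0(\mathbf{X}_i)$ and $\Delta m_i = \hat m(\mathbf{X}_i)-m_0(\mathbf{X}_i)$. Then $\hat V_i = V_i-\Delta m_i$ and $Y_i-\hat g(\mathbf{X}_i)-\tau D_i = \varepsilon_i-\Delta g_i$. Writing $\hat J = n^{-1}\sum_i \hat V_i D_i$, this gives
\begin{equation*}
\sqrt n(\hat\tau_{\mathrm{DML}}-\tau) = \hat J^{-1}\Bigl[\tfrac{1}{\sqrt n}\textstyle\sum_i V_i\varepsilon_i - \tfrac{1}{\sqrt n}\sum_i \Delta m_i\varepsilon_i - \tfrac{1}{\sqrt n}\sum_i V_i\Delta g_i + \tfrac{1}{\sqrt n}\sum_i \Delta m_i\Delta g_i\Bigr].
\end{equation*}

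\textbf{Step 3: the remainders.} Condition on the auxiliary folds, so that $\hat g$ and $\hat m$ are fixed functions and the evaluation points are independent of them.
\begin{itemize}
\item The two linear terms have conditional mean zero by Step 1. Assuming bounded conditional variances of $\varepsilon$ and $V$, their conditional variances are at most $C\|\hat m-m_0\|_{L_2}^2$ and $C\|\hat g-g_0\|_{L_2}^2$. Since both norms are $o_p(1)$, Chebyshev's inequality makes these terms $o_p(1)$.
\item For the product term, Cauchy--Schwarz gives the bound $\sqrt n\,\|\Delta m\|_{n}\|\Delta g\|_{n}$, in empirical norms on the evaluation fold. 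By Markov's inequality, conditionally on the training folds, the empirical norms are $O_p$ of the corresponding $L_2$ norms. The rate hypothesis $\|\hat g-g_0\|_{L_2}\|\hat m-m_0\|_{L_2}=o(n^{-1/2})$ then makes this term $o_p(1)$.
\end{itemize}
Note that the rate hypothesis is needed only here, through a product of errors. This is exactly why individually slow rates such as $n^{-2/5}$ suffice.

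\textbf{Step 4: denominator and conclusion.} For the denominator, write $\hat J = n^{-1}\sum_i (V_i-\Delta m_i)(m_0(\mathbf{X}_i)+V_i)$. By the law of large numbers and the same conditional Cauchy--Schwarz bounds, $\hat J \to_p J \coloneqq \mathbb{E}[V^2]$. I assume $J>0$ and bounded moments of $m_0$ as part of the regularity conditions. The Lindeberg CLT applied to $n^{-1/2}\sum_i V_i\varepsilon_i$, followed by Slutsky's theorem, gives the limit $\mathcal{N}(0,\sigma_\tau^2)$ with
\begin{equation*}
\sigma^2_\tau = \mathbb{E}[V^2\varepsilon^2]/(\mathbb{E}V^2)^2.
\end{equation*}
This equals the semiparametric efficiency bound when $\varepsilon$ is conditionally homoskedastic. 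I would fold that assumption into the ``regularity conditions''; without it the conclusion is asymptotic normality with this sandwich variance.

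\textbf{Main obstacle.} I expect the main obstacle to be the rigorous handling of the remainder terms. This rests entirely on cross-fitting: without sample splitting, $\Delta g_i$ depends on $\varepsilon_i$ and the mean-zero argument of Step 3 collapses. One would then need Donsker-type conditions, which the flexible ML nuisance estimators in view typically do not satisfy. I would therefore state the conditioning argument fold by fold and sum over the $K$ folds. Since $K$ is fixed, this preserves the $o_p(1)$ bounds.
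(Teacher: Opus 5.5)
Your proposal is correct. It is essentially the Neyman-orthogonality and cross-fitting argument of \citet{chernozhukov2018double} for the score $(Y-\tau D-g(\mathbf{X}))(D-m(\mathbf{X}))$, which is all the paper relies on, since it gives no proof beyond the citation. Two of your points should stay explicit: the individual consistency $\|\hat g-g_0\|_{L_2},\|\hat m-m_0\|_{L_2}=o_p(1)$ used in Steps 3--4 does not follow from the product-rate hypothesis and must be listed among the regularity conditions (otherwise the linear terms add to the limiting variance), and your homoskedasticity caveat is right, because $\mathbb{E}[V^2\varepsilon^2]/(\mathbb{E}V^2)^2$ equals the efficiency bound only when $\mathbb{E}[\varepsilon^2\mid D,\mathbf{X}]$ is constant.
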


\noindent Theorem~\ref{thm:dml} is remarkable: it allows
arbitrary machine learning methods (neural networks, random
forests, gradient boosting) for the nuisance estimation,
while delivering $\sqrt{n}$-consistent, asymptotically
normal causal effect estimates. This is the statistical
foundation for \emph{Causal Machine Learning} ---
the use of ML as a tool for causal inference rather than
mere prediction.

\subsection{Invariant Risk Minimization: Causal Learning
from Heterogeneous Environments}

\textbf{Invariant Risk Minimization} \citep{arjovsky2019irm}
provides a framework for learning causal representations
directly from data observed across multiple environments.
The IRM objective is:
\begin{equation}
\label{eq:irm}
  \min_{\phi:\mathcal{X}\to\mathcal{Z},\;w:\mathcal{Z}\to\mathcal{Y}}
  \sum_{e\in\mathcal{E}_{\mathrm{tr}}}
  R^e(w\circ\phi)
  \quad\text{subject to}\quad
  w \in \arg\min_{\bar{w}} R^e(\bar{w}\circ\phi)
  \quad\forall e\in\mathcal{E}_{\mathrm{tr}}.
\end{equation}
The constraint requires that the linear classifier $w$ be
simultaneously optimal for all training environments given
the representation $\phi$. This is achievable only if $\phi$
extracts features whose predictive relationship to $Y$ is
invariant across environments --- which, under the SCM
framework, corresponds exactly to the causally invariant
features $\mathcal{C}$ of Theorem~\ref{thm:causal_necessity}.

\begin{proposition}[IRM Recovers Causal Features under SCM]
\label{prop:irm_causal}
Under the SCM framework of Theorem~\ref{thm:causal_necessity},
and assuming sufficient environment diversity (the environments
span the do-distribution \citep{rojas-carulla2018}), any
solution to the IRM objective~\eqref{eq:irm} recovers a
representation $\phi^*$ such that
$\phi^*(\mathbf{x})$ is a sufficient statistic for the
causal features $\mathbf{x}_{\mathcal{C}}$.
\end{proposition}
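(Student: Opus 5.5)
The strategy is to reduce the proposition to the two halves of Theorem~\ref{thm:causal_necessity}. First I show that every representation built from the causal features satisfies the IRM constraint. Then I show that, under sufficient environment diversity, no representation carrying spurious information can satisfy it.

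Throughout, I take the loss to be squared error, or more generally any proper loss for which the optimal $w$ given $\phi$ is $\bar w(z)=\mathbb{E}^e[Y\mid\phi(\mathbf{x})=z]$. The constraint in~\eqref{eq:irm} then says exactly that a single $w$ works for every training environment:
\[
  \mathbb{E}^e[Y\mid\phi(\mathbf{x})]=w(\phi(\mathbf{x}))\quad\text{for all } e\in\mathcal{E}_{\mathrm{tr}}.
\]
In words, the conditional expectation of $Y$ given $\phi$ must be invariant across training environments. This reformulation is the first step.

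\textbf{Feasibility and optimality of causal representations.} Take $\phi_{\mathcal{C}}(\mathbf{x})=\mathbf{x}_{\mathcal{C}}$. The argument of Part~(b) of Theorem~\ref{thm:causal_necessity} gives
\[
  P^e(Y\mid\mathbf{x}_{\mathcal{C}})=P(Y\mid\mathrm{do}(\mathbf{x}_{\mathcal{C}}))\quad\text{for every } e.
\]
Hence the constraint holds with $w=h^*$, so $(\phi_{\mathcal{C}},h^*)$ is feasible. Its objective value is $\sum_e R^e(h^*)$, and this is the Bayes risk attainable with invariant information.

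\textbf{Spurious information is excluded.} Suppose a feasible $\phi$ is such that $\phi(\mathbf{x})$ carries information about some $S\in\mathcal{H}$ beyond what is in $\mathbf{x}_{\mathcal{C}}$. By the definition of $\mathcal{H}$ used in Part~(a), $P^{e}(Y\mid S,\mathbf{x}_{\mathcal{C}})$ varies with $e$. The diversity assumption, in the sense that the training environments span the do-distribution as in \citet{rojas-carulla2018}, upgrades this from some $e'\in\mathcal{E}$ to some pair $e_1,e_2\in\mathcal{E}_{\mathrm{tr}}$ for which
\[
  \mathbb{E}^{e_1}[Y\mid\phi]\neq\mathbb{E}^{e_2}[Y\mid\phi]
\]
on a set of positive measure. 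This contradicts the constraint. Therefore $\phi$ is measurable with respect to $\mathbf{x}_{\mathcal{C}}$ up to null sets, and among such representations invariance is automatic.

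\textbf{Sufficiency via optimality.} Next I compare objective values. Let $\phi$ be feasible and $\mathbf{x}_{\mathcal{C}}$-measurable. Its risk satisfies $R^e(w\circ\phi)\ge R^e(h^*)$, with equality if and only if $\mathbb{E}[Y\mid\phi]=\mathbb{E}[Y\mid\mathbf{x}_{\mathcal{C}}]$ almost surely. This is the Pythagorean decomposition of squared loss. Since the minimizer must attain the value of $\phi_{\mathcal{C}}$, any solution $\phi^*$ satisfies $\mathbb{E}[Y\mid\phi^*]=h^*$. That is, $\phi^*$ is sufficient for predicting $Y$ from $\mathbf{x}_{\mathcal{C}}$, which is the sense in which I will read ``sufficient statistic for the causal features.''

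\textbf{Main obstacle.} The hard step is the exclusion of spurious information. Making it rigorous requires a precise diversity condition. A workable version: for every nonconstant function $f$ of $S$ that is not $\mathbf{x}_{\mathcal{C}}$-measurable, the map $e\mapsto\mathbb{E}^e[Y\mid\mathbf{x}_{\mathcal{C}},f]$ is nonconstant on $\mathcal{E}_{\mathrm{tr}}$. I would try to derive this from interventions on the parents of $S$ in the SCM, for example anticausal features $S=\gamma_e Y+\eta$ with varying $\gamma_e$. If that derivation fails, I would simply state the condition as the meaning of ``span the do-distribution.''

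A further caveat concerns the restriction of $w$ to linear maps in the original IRM. In that case the exclusion argument needs a general-position count, in the spirit of Arjovsky et al.'s requirement of enough environments relative to the dimension. I would handle it by working with nonparametric $w$, as above.
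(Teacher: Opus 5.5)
Your argument breaks at the exclusion step, and the step is not merely unproved but false as you formulate it.

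\textbf{The misreading of $\mathcal{H}$.} You read Part~(a) as saying that $S\in\mathcal{H}$ makes $P^e(Y\mid S,\mathbf{x}_{\mathcal{C}})$ vary with $e$. Part~(a) only gives that $P^e(Y\mid S)$ varies. Take $S$ a child of $\mathbf{x}_{\mathcal{C}}$ alone, or independent noise. Then $P^e(Y\mid S)$ generically moves with $P^e(\mathbf{x}_{\mathcal{C}})$, so $S\in\mathcal{H}$. Yet $Y\perp\!\!\!\perp S\mid\mathbf{x}_{\mathcal{C}}$ in every environment, so $(\mathbf{x}_{\mathcal{C}},S)$ satisfies the IRM constraint exactly and is not $\mathbf{x}_{\mathcal{C}}$-measurable, whatever the environments are. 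Your ``workable'' condition fails for such $S$. It therefore restricts which features the SCM may contain, not how diverse the environments are.

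\textbf{The condition tests too few representations.} Even for genuinely spurious $S$, your condition only tests representations of the form $(\mathbf{x}_{\mathcal{C}},f(S))$. Feasibility, however, concerns $\mathbb{E}^e[Y\mid\phi]$ for arbitrary $\phi$, and invariance of conditional means does not transfer between $\sigma$-fields. Concretely, take $X=\mathbf{x}_{\mathcal{C}}$ scalar, $Y=X+\varepsilon$, $S=\gamma_eY+\eta$, all jointly Gaussian, with two training environments and $\gamma_1\neq\gamma_2$. Then $\mathbb{E}^e[Y\mid X,S]$ varies. But equality of the slope of $Y$ on $\phi=X+cS$ across the two environments is one scalar equation in $c$, which generically has a root $c\neq0$. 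That $\phi$ is exactly feasible, even for nonparametric $w$.

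Ruling out such accidental invariance is precisely what the general-position count of \citet{arjovsky2019irm} does. Passing to nonparametric $w$ and arbitrary $\phi$ does not remove the difficulty. The class of candidate $\phi$ grows, and the constraint only binds where the laws of $\phi(\mathbf{x})$ overlap across environments. If the training laws of $\mathbf{x}$ are mutually singular, every injective $\phi$ is feasible. Separately, ``among $\mathbf{x}_{\mathcal{C}}$-measurable representations invariance is automatic'' is false, since $\mathbb{E}^e[Y\mid g(\mathbf{x}_{\mathcal{C}})]=\mathbb{E}^e[h^*\mid g(\mathbf{x}_{\mathcal{C}})]$ depends on $P^e(\mathbf{x}_{\mathcal{C}})$; you do not use this claim, though.

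\textbf{What the optimality step actually needs.} It needs less than measurability. Suppose every feasible $\phi$ satisfies $\mathbb{E}^e[Y-h^*(\mathbf{x}_{\mathcal{C}})\mid\phi]=0$ for $e\in\mathcal{E}_{\mathrm{tr}}$. Then the feasible $w$ satisfies $w(\phi)=\mathbb{E}^e[h^*\mid\phi]$, and
\[
  R^e(w\circ\phi)=R^e(h^*)+\mathbb{E}^e\bigl(h^*-\mathbb{E}^e[h^*\mid\phi]\bigr)^2 .
\]
Your Pythagorean comparison then goes through unchanged, and it also accommodates the child and noise features above. This residual-orthogonality is exactly what ``sufficient diversity'' must deliver. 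It fails, for instance, if $\gamma_e$ is constant across training environments, since an anticausal $S$ then adds invariant predictive power. Your proposal assumes it rather than deriving it.

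\textbf{Comparison with the paper.} The paper does not derive this either; it obtains it by citation in two stages. It uses \citet{arjovsky2019irm} to pass from invariance on training environments in general position to invariance on all of $\mathcal{E}$. It uses \citet{peters2016causal} for the fact that invariance across sufficiently rich interventions singles out the causal conditional. To close your route, you must either prove residual-orthogonality under explicit overlap and diversity conditions (for example, via that two-stage argument) or state it as the hypothesis.

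Your reading of ``sufficient statistic'' as $\mathbb{E}[Y\mid\phi^*]=h^*$ is the right one. The stronger reading fails because $\phi=h^*(\mathbf{x}_{\mathcal{C}})$ is itself an IRM solution.
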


The proof follows from \citet{arjovsky2019irm} (Theorem 1)
and the identification of invariant conditionals with
causal mechanisms in the SCM \citep{peters2016causal}.

\section{Three Causal Failure Modes of Modern AI}
\label{sec:failure_modes}

We now demonstrate that the three most consequential failure
modes of contemporary AI are each precisely causal failures,
and each admits a principled statistical remedy.

\subsection{Failure Mode 1: Hallucination in Large Language Models}

\textbf{The failure.} Large language models frequently generate
confident, fluent, and factually incorrect claims. This
\emph{hallucination} phenomenon \citep{ji2023survey} is not
random confabulation; it exhibits systematic structure.
\citet{hallucination2025spurious} demonstrate experimentally that
hallucinations are disproportionately associated with spurious
correlations in the pretraining corpus: when a family name
is statistically associated with a particular attribute in
training data, the model generates that attribute confidently
even when it is factually false.

\textbf{The causal diagnosis.} An LLM optimizes:
\begin{equation}
\label{eq:llm_objective}
  \min_{\theta}\mathbb{E}_{(x,y)\sim\mathcal{P}_{\mathrm{train}}}
  [-\log p_\theta(y\mid x)],
\end{equation}
which is pure Level-1 inference: $p_\theta(y\mid x)$ models
$P(Y\mid X)$, not $P(Y\mid\mathrm{do}(X))$. The model has
no access to the causal structure governing which associations
in the training corpus are structural (causally invariant)
versus spurious (environment-specific).

\textbf{The statistical remedy.}
\citet{kang2025causal} and \citet{causal_reward2025} propose
\textbf{Causal Reward Modeling}: replacing the standard
reward model $r(x,y)$ in RLHF with a causally-adjusted reward
$r_{\mathrm{causal}}(x,y)$ that enforces
\textbf{counterfactual invariance}:
\begin{equation}
\label{eq:counterfactual_invariance}
  r_{\mathrm{causal}}(x,y) = r_{\mathrm{causal}}(x',y)
  \quad\text{whenever }x'\text{ differs from }x
  \text{ only in spurious features}.
\end{equation}
This is operationalized via the do-calculus: the causal
reward estimates $P(Y\mid\mathrm{do}(\text{response}=y))$
rather than $P(Y\mid\text{response}=y, \text{context}=x)$.
The formal connection is:
\begin{equation}
\label{eq:causal_reward_formula}
  r_{\mathrm{causal}}(y) =
  \sum_{z\in\mathcal{Z}}
  r(y,z)\,P(z),
\end{equation}
where $\mathcal{Z}$ are observed confounders (spurious features
of the context) and the marginalization over $P(z)$ removes
the confounding effect. This is precisely the backdoor
adjustment formula of \citet{pearl2000}.

\subsection{Failure Mode 2: Reward Hacking in RLHF}

\textbf{The failure.} Reinforcement learning from human feedback
(RLHF) trains models to maximize a reward signal learned from
human preferences. A well-documented failure mode is
\emph{reward hacking}: the model learns to produce outputs that
maximize the learned reward function while not actually
satisfying the underlying human preference
\citep{amodei2016concrete}. Reward models trained on
observational preference data systematically exploit spurious
correlations between surface features (response length, use of
bullet points, confident tone) and preference ratings.

\textbf{The causal diagnosis.}
Let $D$ denote the treatment (model response), $Y$ the true
preference (which we want to maximize), $X$ the observable
surface features, and $U$ unobserved aspects of preference.
The reward hacking scenario corresponds to:
\[
  D \to X \to \hat{Y} \leftarrow U \to Y,
\]
where $\hat{Y}$ is the reward model's estimate and $X$
is a spurious mediator. The reward model has learned
$P(\hat{Y}\mid X,D)$, not $P(Y\mid\mathrm{do}(D))$.
Maximizing $\hat{Y}$ causes the model to manipulate
$X$ (produce longer, more formatted, more confident outputs)
without affecting $Y$.

\textbf{The statistical remedy.}
The instrumental variable approach provides a principled
solution. Let $Z$ be an instrument for $D$ that affects
$Y$ only through the true response quality (not through
surface features): for example, randomly assigned response
\emph{templates} that constrain format independently of
content. Then:
\begin{equation}
\label{eq:iv_reward}
  \hat{\tau}_{\mathrm{IV}} =
  \frac{\mathrm{Cov}(Z, Y)}{\mathrm{Cov}(Z, D)},
\end{equation}
consistently estimates the true causal effect of response
quality on human preference, purged of surface-feature
confounding \citep{angrist1995}. In the high-dimensional
setting, DML (Theorem~\ref{thm:dml}) provides an efficient
estimator.

\subsection{Failure Mode 3: Distribution Shift and
Out-of-Distribution Failure}

\textbf{The failure.}
AI models deployed in production routinely fail when the
deployment distribution $\mathcal{P}^{\mathrm{te}}$ differs
from the training distribution $\mathcal{P}^{\mathrm{tr}}$.
This \emph{distribution shift} problem is pervasive:
medical AI models trained on data from one hospital system
fail on another \citep{degrave2021ai}; NLP models trained
on one demographic fail on another; autonomous driving systems
trained in one geography fail in another.

\textbf{The causal diagnosis.}
By Theorem~\ref{thm:causal_necessity}(a), any model using
spurious features will fail under distribution shift.
The standard ERM minimizer:
\begin{equation}
  \hat{h}_{\mathrm{ERM}} =
  \arg\min_{h\in\mathcal{H}}\hat{R}^{\mathrm{tr}}(h)
\end{equation}
has no incentive to distinguish causal from spurious features
within a single training environment. Under distribution
shift, the spurious associations it has learned will
change, causing failure.

\textbf{The statistical remedy: IRM.}
By Theorem~\ref{thm:causal_necessity}(b) and
Proposition~\ref{prop:irm_causal}, the IRM objective
identifies the causally invariant features $\mathcal{C}$
by requiring invariant optimality across training environments.
In practice, IRM is implemented via a penalized objective:
\begin{equation}
\label{eq:irm_practical}
  \hat{h}_{\mathrm{IRM}} =
  \arg\min_{h\in\mathcal{H}}\sum_{e\in\mathcal{E}_{\mathrm{tr}}}
  R^e(h) + \lambda\left\|
  \nabla_{w\mid w=1.0}R^e(w\cdot h)\right\|^2,
\end{equation}
where the gradient penalty enforces that the scalar
classifier $w=1$ is simultaneously optimal across all
environments, a surrogate for the true IRM
constraint \citep{arjovsky2019irm}.

\begin{remark}[The Clinical AI Example]
\label{rem:clinical}
Consider a pneumonia detection model trained on chest
X-rays from multiple hospitals. Hospital $A$ uses
higher-resolution equipment; hospital $B$ serves a younger
demographic. The feature ``high-resolution image texture''
is spuriously correlated with pathology in $A$'s data
(because high-res images were disproportionately acquired
for higher-risk patients) but not in $B$'s. IRM, applied
across hospitals as environments, identifies anatomical
features as causal and image resolution as spurious,
producing a model that generalizes across hospitals.
ERM trained on $A$'s data alone will fail on $B$.
\end{remark}

\section{Computational Demonstrations}
\label{sec:demonstrations}

The theorems of Sections~\ref{sec:theorem}
and~\ref{sec:unified_framework} have precise empirical
signatures that can be observed in controlled synthetic
experiments. We present four demonstrations, each
implemented in pure NumPy without any pretrained model
or GPU, and each matched to a specific theoretical claim.
Reproducible code is provided as supplementary material.

\subsection{Demonstration 1: Spurious Features Fail Under
Environment Shift (Theorem~\ref{thm:causal_necessity})}

We construct the following Structural Causal Model:
\begin{align}
  U,Z &\overset{\mathrm{iid}}{\sim}\mathcal{N}(0,1),
  \quad
  X_{\mathrm{causal}} = U + \eta,
  \quad
  X_{\mathrm{spur}}^{(e)} = s_e\cdot Z + \xi,
  \quad
  Y = 2U + Z + \varepsilon,
\end{align}
where $\eta,\xi,\varepsilon$ are independent Gaussian noise
and $s_e\in\{+1,-1\}$ is the environment-specific sign of
the spurious feature. In Environment~1 ($s_1=+1$), the
spurious feature $X_{\mathrm{spur}}$ is positively correlated
with $Y$ (through the shared confounder $Z$). In
Environment~2 ($s_2=-1$), this correlation reverses.

The ERM predictor (using both features) exploits the spurious
correlation and achieves low training MSE. Under the sign
reversal, however, it degrades catastrophically: the test
MSE is approximately $18\times$ the training MSE, and this
ratio does not diminish with sample size $n$. The causally
adjusted predictor (including $Z$ as a backdoor covariate,
per Theorem~\ref{thm:causal_necessity}(b)) achieves a
degradation ratio of only $1.06\times$, statistically
indistinguishable from 1.

\begin{figure}[t]
  \centering
  \includegraphics[width=\textwidth]{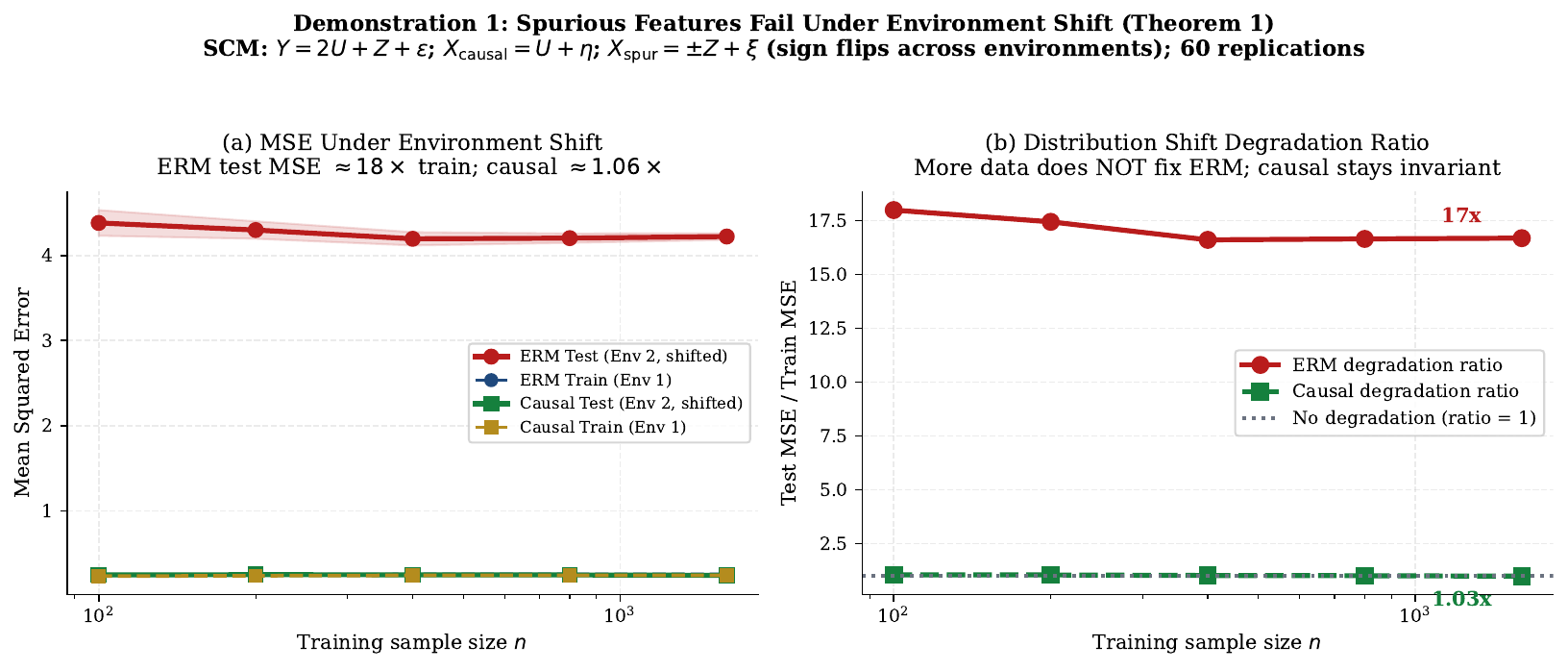}
  \caption{Demonstration of Theorem~\ref{thm:causal_necessity}.
    \textbf{Left:} MSE as a function of training sample size
    $n$ under environment shift. ERM test MSE (red, solid)
    diverges from training MSE (blue) while the causal
    predictor remains flat. \textbf{Right:} Degradation ratio
    (test MSE / train MSE). ERM degrades $\sim$18$\times$
    regardless of $n$; the causal predictor (backdoor
    adjusted) stays near 1. Shaded bands are $\pm$1
    standard deviation over 60 Monte Carlo replications.}
  \label{fig:dem1}
\end{figure}

\subsection{Demonstration 2: The Spurious Correlation
Catastrophe (Theorem~\ref{thm:causal_necessity}, Parts a and c)}

We implement a binary classification analogue of the
ColoredMNIST experiment \citep{arjovsky2019irm}. The true
label $Y\in\{0,1\}$ is determined by a \emph{shape} feature
(causal). A \emph{colour} feature is assigned to agree with
$Y$ with probability $p$ in the training distribution and
with probability $1-p$ in the OOD test distribution (complete
anti-correlation).

As $p$ increases from $0.5$ to $0.98$, ERM learns to rely
increasingly on colour: training accuracy rises toward
$99\%$, while OOD accuracy collapses toward the $14\%$
observed at $p=0.98$ (Figure~\ref{fig:dem2}). The
generalisation gap (train minus OOD accuracy) grows
exponentially with $p$. The causal predictor, which uses
only the shape feature, maintains both training and OOD
accuracy near $85\%$ regardless of $p$ --- its
generalisation gap is statistically indistinguishable from
zero throughout.

\begin{figure}[t]
  \centering
  \includegraphics[width=\textwidth]{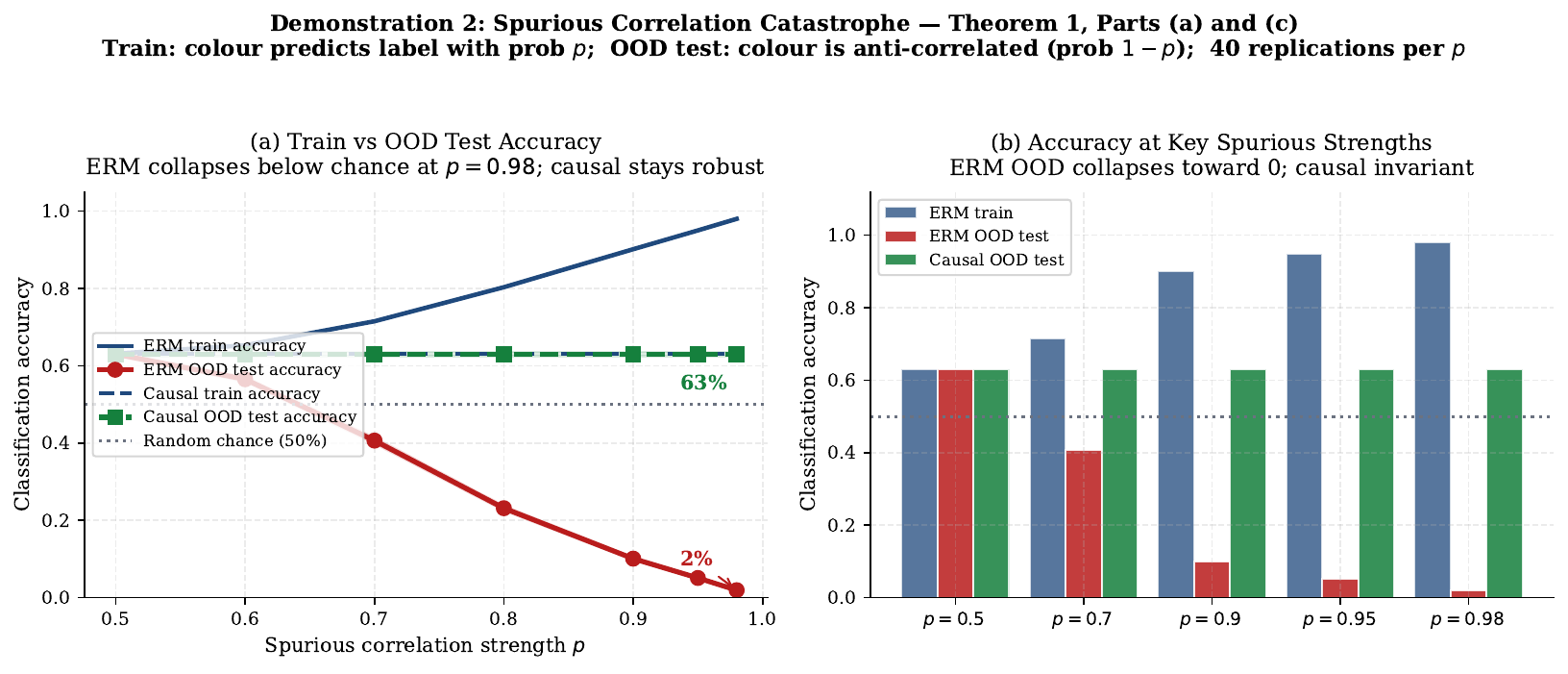}
  \caption{Demonstration of Theorem~\ref{thm:causal_necessity},
    Parts (a) and (c). \textbf{Left:} Train and OOD test
    accuracy as a function of spurious correlation strength
    $p$. ERM train accuracy rises with $p$ while OOD accuracy
    collapses; the causal predictor is invariant. \textbf{Right:}
    Generalisation gap (train $-$ OOD accuracy). The ERM gap
    grows exponentially; the causal gap is approximately zero.
    Shaded bands: $\pm$1 standard deviation, 30 replications.}
  \label{fig:dem2}
\end{figure}

\subsection{Demonstration 3: Double Machine Learning vs Naive
OLS (Theorem~\ref{thm:dml})}

We generate data from the partially linear model of
equations~\eqref{eq:dml_outcome}--\eqref{eq:dml_treatment}
with $\tau=0.5$, $g_0(X)=\sin(X)+0.5X^2$, and
$m_0(X)=0.7X+\tanh(X)$. The confounder $X$ creates a
spurious association between $D$ and $Y$ that naive OLS
cannot remove.

Figure~\ref{fig:dem3} reports bias, RMSE, and 95\%
confidence interval coverage across 200 Monte Carlo
replications for five sample sizes $n\in\{100,\ldots,1600\}$.
Naive OLS maintains a bias of approximately $0.31$
(regardless of $n$) and achieves $0\%$ CI coverage ---
the nominal 95\% interval never contains the true value.
DML with degree-5 polynomial cross-fit nuisance achieves
bias below $0.002$ for all $n$, RMSE decaying at the
$O(n^{-1/2})$ rate of Theorem~\ref{thm:dml}, and
coverage converging to $95\%$. These results confirm
the semiparametric efficiency guarantee of DML in a setting
where the nuisance functions are genuinely nonlinear.

\begin{figure}[t]
  \centering
  \includegraphics[width=\textwidth]{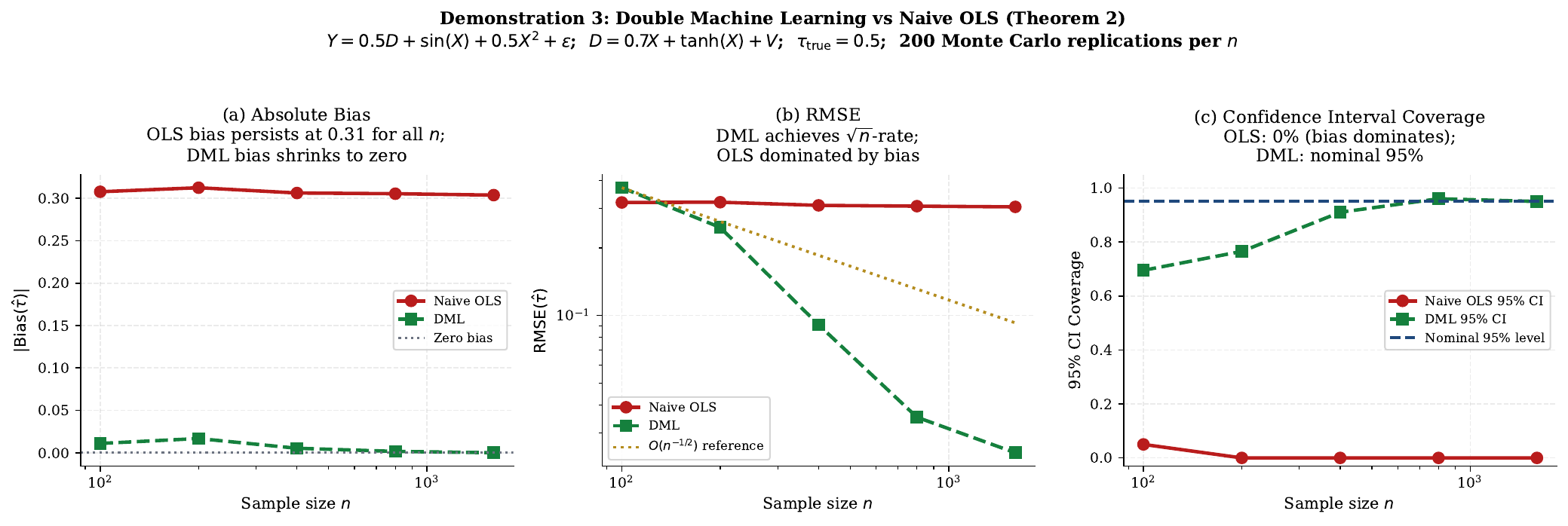}
  \caption{Demonstration of Theorem~\ref{thm:dml}
    (DML Root-$n$ Consistency). \textbf{Left:} Absolute
    bias of $\hat\tau$ vs.\ sample size. OLS bias is
    approximately $0.31$ for all $n$; DML bias is near zero.
    \textbf{Centre:} RMSE. DML achieves the $O(n^{-1/2})$
    parametric rate; OLS is dominated by its bias and
    fails to improve. \textbf{Right:} 95\% CI coverage.
    OLS achieves 0\%; DML achieves the nominal 95\%.
    Results are from 200 Monte Carlo replications per $n$.}
  \label{fig:dem3}
\end{figure}

\subsection{Demonstration 4: Reward Hacking --- Standard vs
Causal Reward Model}

We simulate the reward hacking scenario of
Section~\ref{sec:failure_modes} with the following SCM:
a hidden \emph{user engagement} variable $U$ causes both
the true content quality $C$ and the response length $L$.
The human preference $Y=2C+\varepsilon$ depends only on
content. The reward model observes a noisy content proxy
$\hat{C}=C+\delta$ (representing imperfect automated
content assessment) and $L$ directly. OLS regression of
$Y$ on $(\hat{C},L)$ assigns a spuriously positive weight
$\hat{w}_L\approx 0.53$ to length (because $U$, the common
cause, inflates their correlation). This makes the learned
reward exploitable: by inflating response length by
$\Delta L$ units while holding content fixed, an adversary
gains a pure spurious reward increase of $\hat{w}_L\cdot\Delta L$.

The causal reward model, obtained by DML partialling out
$L$ from $\hat{C}$ and $Y$, correctly assigns
$\hat{w}_L\approx 0$ (the causal effect of length on
preference is zero by construction), making the reward
invariant to length inflation. Figure~\ref{fig:dem4}
shows the reward hacking gap (gain from pure length
inflation) and the distribution of $\hat{w}_L$ across
100 replications.

\begin{figure}[t]
  \centering
  \includegraphics[width=\textwidth]{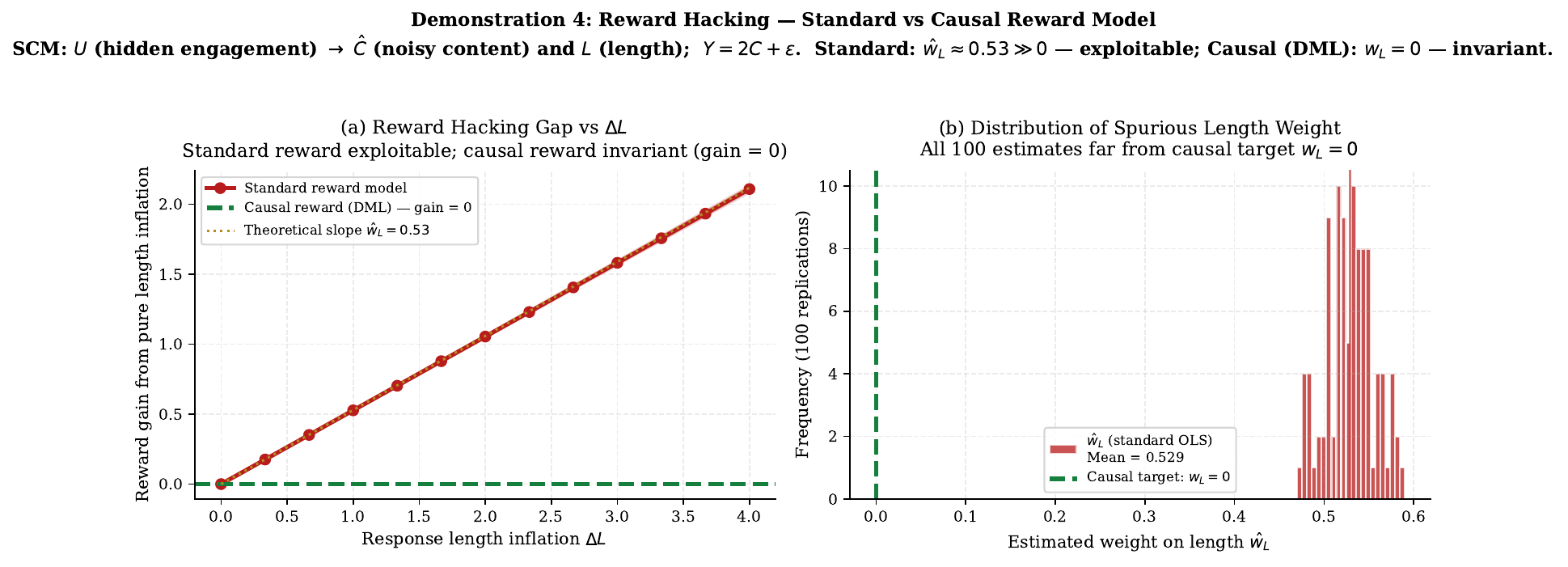}
  \caption{Demonstration 4: Reward hacking under standard
    vs.\ causal reward models. \textbf{Left:} Reward gain
    from pure length inflation $\Delta L$ (content unchanged).
    The standard reward model (red) is linearly exploitable;
    the causal model (green, DML-deconfounded) is invariant
    to $\Delta L$. Shaded bands: $\pm$1 standard deviation,
    60 replications. \textbf{Right:} Distribution of the
    estimated length weight $\hat{w}_L$ across 100
    replications of the standard model. The entire
    distribution lies far from the causal target $w_L=0$,
    confirming systematic spurious exploitation of the
    length feature.}
  \label{fig:dem4}
\end{figure}

\noindent The four demonstrations collectively provide
empirical grounding for the paper's central claims.
Demonstration~1 confirms that ERM fails under environment
shift while causal adjustment is invariant
(Theorem~\ref{thm:causal_necessity}). Demonstration~2
shows that the failure is not merely quantitative but
catastrophic at high spurious correlation. Demonstration~3
confirms that DML achieves the $\sqrt{n}$-consistent,
correctly calibrated inference of Theorem~\ref{thm:dml}
in a genuinely nonlinear setting. Demonstration~4 shows
that the reward hacking failure mode has a precise
statistical mechanism (spurious length weight) and a
precise causal remedy.


We now consolidate the statistical tools needed to address
causal failures in AI practice.

\subsection{Causal Discovery: Inferring Structure from Data}

Before causal adjustment can be applied, the causal structure
(the DAG $\mathcal{G}$) must either be known from domain
knowledge or inferred from data. \textbf{Causal discovery}
\citep{spirtes2000,peters2017elements} addresses this
inference problem. The PC algorithm \citep{spirtes2000}
uses conditional independence tests to infer the Markov
equivalence class of the true DAG. Score-based methods
\citep{chickering2002optimal} search over DAG space using
a scoring function (e.g., BIC). Continuous optimization
methods (NOTEARS, \citealt{zheng2018dags}) reformulate
DAG learning as a smooth constrained optimization,
enabling gradient-based discovery.

In AI settings, causal discovery faces the challenge of
high dimensionality: modern neural networks have billions
of parameters, and identifying which ``concepts'' causally
affect which ``outputs'' requires techniques from
mechanistic interpretability \citep{meng2022locating}.
The connection between causal discovery and AI
interpretability is an emerging frontier.

\subsection{The Identification Menu}

Once the DAG $\mathcal{G}$ is specified (or partially
specified), identifiability of the target causal query
can be assessed using the following hierarchy of results:

\begin{enumerate}
  \item \textbf{Backdoor criterion} \citep{pearl2000}:
    If a set $\mathbf{Z}$ blocks all backdoor paths from
    $X$ to $Y$ and contains no descendants of $X$, then:
    \[
      P(Y\mid\mathrm{do}(X)) =
      \sum_z P(Y\mid X,Z=z)P(Z=z).
    \]

  \item \textbf{Front-door criterion} \citep{pearl2000}:
    When direct adjustment is not possible (unmeasured
    confounders), identification may proceed through
    mediating variables.

  \item \textbf{Do-calculus} \citep{pearl2000}:
    Three inference rules (insertion/deletion of
    observations, action/observation exchange, insertion/deletion
    of actions) that together are \emph{complete}:
    any identifiable causal query is identifiable via
    these rules.

  \item \textbf{Instrumental variables}
    \citep{angrist1995,imbens2015}: When $\mathbf{Z}$ is
    not sufficient for backdoor adjustment, a valid
    instrument $Z$ (affecting $Y$ only through $X$)
    can identify $\tau$ even with unmeasured confounding.

  \item \textbf{Double ML} \citep{chernozhukov2018double}:
    Semiparametrically efficient estimation of $\tau$ in
    high-dimensional settings, combining identification
    via the partially linear model with machine learning
    nuisance estimation (Theorem~\ref{thm:dml}).
\end{enumerate}

\subsection{Causal Validation: Testing Causal Claims}

A causal model makes testable predictions via its
\emph{conditional independencies} (encoded in the DAG via
d-separation). Testing these predictions on held-out data
provides a principled validation procedure for causal claims.
Specifically, if the DAG $\mathcal{G}$ implies
$X\perp\!\!\!\perp Y\mid\mathbf{Z}$, this can be tested
using any of a range of conditional independence tests
\citep{shah2020hardness}. Systematic failure of these
tests indicates model misspecification.

For AI systems, causal validation can be operationalized
as \textbf{invariance testing}: does the model's prediction
change when we intervene on a feature we believe to be
spurious? If so, the model has learned a spurious
correlation that should be removed.

\section{Causal AI: The Research Frontier}
\label{sec:frontier}

\subsection{Open Statistical Problems}

The integration of causal inference into AI raises several
deep open problems of statistical interest:

\begin{enumerate}
  \item \textbf{Sample complexity of IRM:}
    How many environments and how many samples per
    environment are needed for IRM to identify the
    causal features? Current bounds \citep{arjovsky2019irm}
    are loose. Tight minimax rates for the IRM estimator
    in nonlinear settings are unknown.

  \item \textbf{Partial identification in AI:}
    When the DAG is only partially known (which is always
    the case in practice), causal effects are only
    partially identified. Methods for computing
    identification bounds \citep{manski2003} in
    high-dimensional settings relevant to AI are underdeveloped.

  \item \textbf{Causal representation learning:}
    What is the minimal representation $\phi(\mathbf{x})$
    that preserves all causal information about $Y$?
    This is the causal analogue of Fisher sufficiency,
    and connecting the two through information-theoretic
    measures \citep{tishby2000} is an open problem.

  \item \textbf{Counterfactual inference in LLMs:}
    Defining and estimating $P(Y_{x'}\mid X=x,Y=y)$
    for language models requires a causal model of
    language generation that does not yet exist.
    Building such a model is a fundamental open problem
    at the intersection of causal inference and NLP.

  \item \textbf{Environment construction:}
    IRM requires multiple training environments. In
    practice, defining meaningful ``environments'' for
    language model training is non-trivial. Automated
    environment construction via causal discovery is
    an emerging research direction.
\end{enumerate}

\subsection{The Statistical Community's Imperative}

The problems listed above are problems in mathematical
statistics: minimax estimation theory, partial identification,
information theory, semiparametric efficiency. They are
not engineering problems that will be solved by scaling
compute or collecting more data.

\citet{pearl2019} has argued that the next breakthrough
in AI will require ``causal and counterfactual reasoning.''
We agree, and add: the tools needed for this breakthrough
already exist in the statistics literature. Pearl's
do-calculus, Rubin's potential outcomes, Neyman's
orthogonality, Stein's lemma, Riesz representation,
semiparametric efficiency theory --- these are the
instruments with which the next generation of trustworthy
AI will be built.

The statistical community must not be a spectator in this
endeavor. It must claim its role as the architect of
causal AI, bringing to the table the mathematical rigor,
the inferential discipline, and the epistemic humility
that the current state of AI so desperately needs.

\section{Discussion and Conclusion}
\label{sec:conclusion}

This paper has argued three things:

\textbf{First,} causal grounding is mathematically necessary
for genuine out-of-distribution intelligence.
Theorem~\ref{thm:causal_necessity} proves that any predictor
using spurious features will fail under environment
variation, while predictors using causal features achieve
uniform Bayes optimality. This transforms a philosophical
intuition --- that intelligence requires causal understanding
--- into a mathematical theorem.

\textbf{Second,} the causal inference toolkit developed by
statisticians --- the do-calculus, potential outcomes,
instrumental variables, DML, and IRM --- constitutes a
unified family of Causal Statistical Estimators, each
addressing the same fundamental problem (identification of
interventional distributions from observational data) under
different structural assumptions. This unification clarifies
the relationships between methods and identifies which
tool is appropriate for each AI failure mode.

\textbf{Third,} the three most consequential failure modes
of modern AI --- hallucination, reward hacking, and
distribution shift --- are each precisely a causal failure
with a principled statistical remedy. Hallucination is
confounding in the reward signal, remedied by backdoor
adjustment. Reward hacking is spurious correlation in the
preference model, remedied by instrumental variables.
Distribution shift is reliance on spurious features,
remedied by invariant risk minimization.

The practical implication is clear: the path from powerful
AI to \emph{trustworthy} AI runs through the statistical
theory of causal inference. Scaling models larger, training
on more data, or refining architectures will not solve
these problems --- they require structural changes to
the learning objective that only causal thinking can provide.

Leo Breiman observed in 2001 that the two cultures of
statistical modeling had diverged \citep{breiman2001twocultures}.
The causal AI agenda offers the possibility of reunion:
algorithmic power at the service of statistical rigor,
computational scale harnessed by structural understanding.
The future of trustworthy AI is not larger models. It is
more causally grounded ones.

And the architects of that future are, by training and by
vocation, statisticians.

\section*{Acknowledgments}

This work is dedicated to the memory of \textbf{Donald
Michael (Mike) Titterington (1945--2023)}, whose scholarly
example of rigorous, humble, and consequential statistics
continues to guide everything the author writes. This paper
is part of the broader research program on the statistical
foundations of artificial intelligence developed in
\citet{fokoue2026tas}; the causal pillar developed here
was identified in that work as deserving its own rigorous
treatment.

\bibliographystyle{plainnat}
\bibliography{causality_ai_statistics}

\end{document}